\documentclass[letterpaper, 10 pt, conference]{ieeeconf}  % Comment this line out if you need a4paper

\IEEEoverridecommandlockouts                              % This command is only needed if 
\usepackage{amsmath,amsfonts}
\usepackage[caption=false]{subfig}
\usepackage{stfloats}
\usepackage{url}
\usepackage{graphicx}
\usepackage{siunitx}
\usepackage{xcolor}

\usepackage[subpreambles=true,print=false]{standalone}

\newtheorem{theorem}{Theorem}
\title{\LARGE \bf
Towards Decentralized Formation of Minimum-Length Communication Networks Using Robot Swarms
}

\author{
 Genki Miyauchi$^{1}$, Mohamed S. Talamali$^{2}$, Julian Rau$^{3}$ and Roderich Gro\ss$^{2,3}$%
\thanks{$^{1}$Genki Miyauchi is with the Bristol Robotics Laboratory, University of Bristol, Bristol, UK.
{\tt\small g.miyauchi@bristol.ac.uk}}%
\thanks{$^{2}$Mohamed S. Talamali and Roderich Gro{\ss} are with the School of Electrical and Electronic Engineering, University of Sheffield, Sheffield, UK. {\tt\small m.s.talamali@sheffield.ac.uk}}%
\thanks{$^{3}$Julian Rau and Roderich Gro{\ss} are with the Department of Computer Science, Technical University of Darmstadt, Darmstadt, Germany. {\tt\small \{julian.rau, roderich.gross\}@tu-darmstadt.de}}%
}

\begin{document}

\maketitle
\thispagestyle{empty}
\pagestyle{empty}

%%%%%%%%%%%%%%%%%%%%%%%%%%%%%%%%%%%%%%%%%%%%%%%%%%%%%%%%%%%%%%%%%%%%%%%%%%%%%%%%
\begin{abstract}

Multi-robot missions in infrastructure-denied environments frequently rely on reliable communication links between spatially separated locations. We propose a fully decentralized framework for constructing and dynamically maintaining communication networks without centralized topology planning or global positioning infrastructure. Driven strictly by local interactions, robots reconfigure local network topologies and adjust their physical positions to minimize overall network length while adhering to communication constraints. Formal analysis shows that our local reconfiguration operations guarantee continuous network connectivity, strictly decrease network length with every branch transfer, and bound worst-case performance to the shortest starlike tree. Embodied simulations and physical multi-robot experiments confirm that our approach forms networks near the length of centrally computed Euclidean Steiner trees. Additionally, the system dynamically adapts to moving targets and optimizes deployment by utilizing only necessary connectors, preserving excess robots for auxiliary tasks. This work enables autonomous swarms to self-organize adaptive ad hoc communication infrastructure in communication-denied environments, which could support applications varying from subterranean exploration to planetary missions.

\end{abstract}

%%%%%%%%%%%%%%%%%%%%%%%%%%%%%%%%%%%%%%%%%%%%%%%%%%%%%%%%%%%%%%%%%%%%%%%%%%%%%%%%
\section{Introduction}
\label{sec:introduction}

Multi-robot communication networks can support the exchange of task-relevant information and coordination among spatially separated robots, particularly in remote environments where communication infrastructure and positioning systems may be unavailable or unreliable. To maintain such networks over large distances, some robots can act as relay nodes, hereafter called connectors, by positioning themselves between distant robots and maintaining communication links with neighboring robots (see Fig.~\ref{fig:overview-screenshot}).
The remaining robots can then be used to perform other tasks. 

Because robots have limited communication range, networks of excessive lengths generally require more connectors, which can consume a substantial fraction of the robot team.
They can also increase the distance traveled by robots navigating or transporting goods along the network, as well as the number of relay hops required for communication.
Thus, beyond maintaining connectivity, reducing the total length of the communication network can improve how efficiently it supports the robot team.

When only two locations of interest must be connected, the minimal-length communication network is a path graph. For three or more locations, minimal-length networks take the form of tree structures known as the Euclidean Steiner trees~\cite{promel2002Steiner}. However, directly computing and deploying a Steiner tree is not realistic for the settings considered here: the robots do not have access to a shared global coordinate frame, knowledge of all robot positions, or a central authority that assigns network positions and connections. Moreover, the robots have limited communication range, and connectivity must be maintained as the robots move.

Existing approaches to multi-robot connectivity maintenance have demonstrated robot teams that preserve communication links~\cite{krupke2015distributed}, form chains or trees~\cite{Majcherczyk2018, varadharajan2020swarm}, and adapt to changing task locations~\cite{luo2020behavior, miyauchi2022multi}. Nevertheless, many decentralized approaches primarily focus on maintaining connectivity rather than reducing the total length of the resulting network. Therefore, the placement of connectors may be inefficient, causing the network to use more robots than necessary. Centralized approaches can optimize relay placement or network structure directly~\cite{lee2022dynamic}, but they typically require global position information and centralized assignment of robot behaviors. 

\begin{figure}
    \centering
    \includegraphics[width=\linewidth]{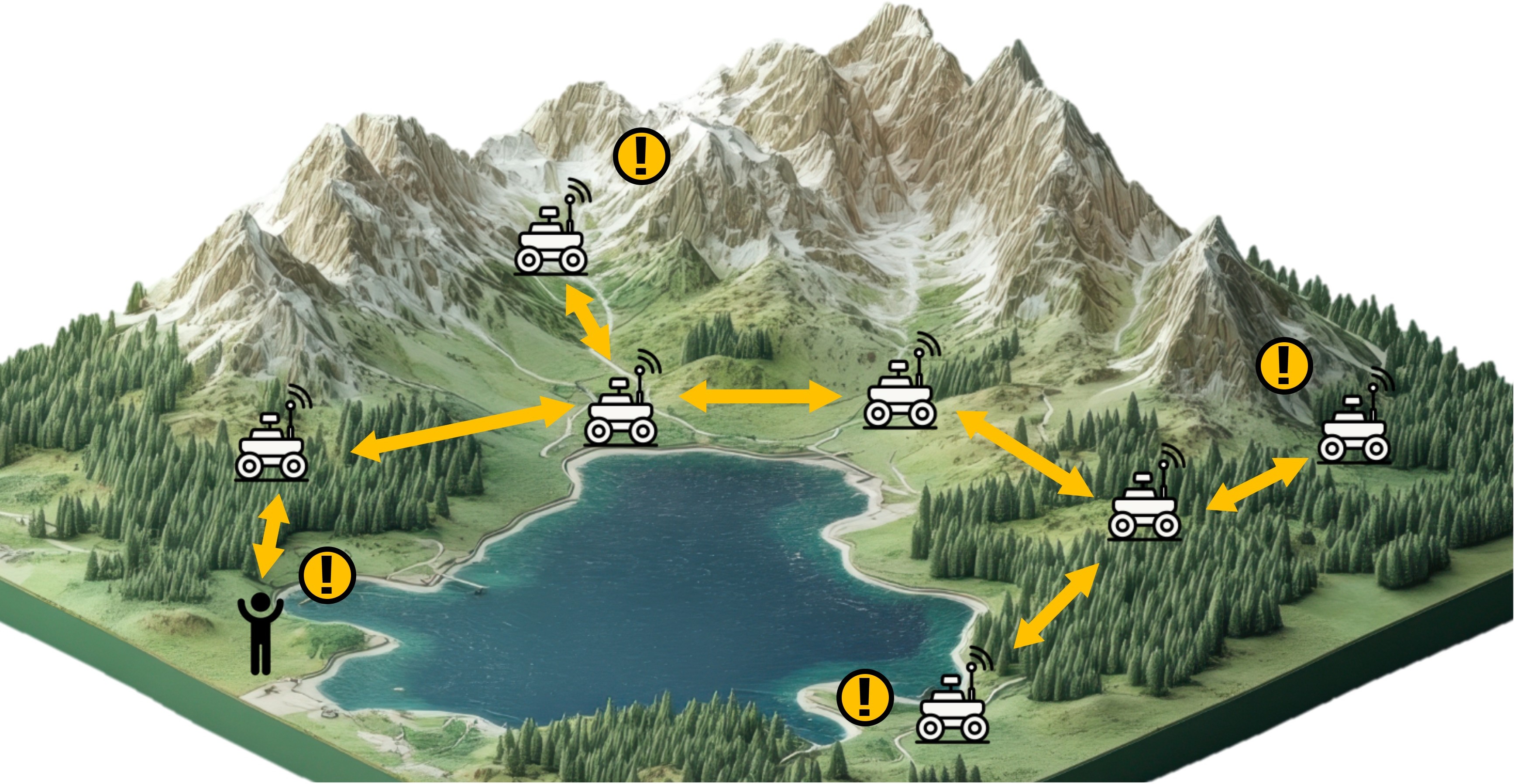}
    \caption{Robots forming a communication network between several locations in the environment. Some robots (or humans) operate at the locations of interest (indicated by exclamation marks) while other robots maintain the network in an environment where global positioning systems and communication infrastructure are not available.}
    \label{fig:overview-screenshot}
\end{figure}

This paper proposes a decentralized controller that enables a group of mobile robots to form and maintain short, Steiner-tree-like communication networks.
The robots do not rely on global positioning, global communication, or a central planner. Instead, they use local sensing and communication to decide when to add connectors to maintain connectivity, when redundant connectors can be removed, and when network connections should be reassigned to nearby connectors to reduce total edge length. In addition, each connector locally adjusts its position relative to its network neighbors to make the communication network approach Steiner-tree-like structures. 
We show that the network remains connected at all times, stabilizes to a shortest starlike tree without merging redundant edges, and can be further shortened by merging redundant edges.
Through embodied simulations, we show that the proposed approach forms networks that are shorter than the shortest starlike trees and qualitatively closer to Steiner trees.
We further demonstrate its practical applicability through experiments with physical robots.

The remainder of this paper is organized as follows. Section~\ref{sec:related-work} reviews prior work on Steiner tree problems and multi-robot connectivity maintenance, followed by the problem formulation in Section~\ref{sec:problem-formulation}. Section~\ref{sec:methodology} then introduces the proposed decentralized network formation method, while Section~\ref{sec:analysis} analyzes its network-length properties and connectivity preservation. The approach is evaluated in simulation and on physical robots in Sections~\ref{sec:simulations} and~\ref{sec:experiments}, respectively. Finally, Section~\ref{sec:conclusion} concludes the paper.

\section{Related Work}
\label{sec:related-work}

\subsection{Steiner Trees and Relay Placement}
\label{sec:steiner-tree}

A Euclidean Steiner tree connects a set of terminals with minimum total network length by allowing additional points, called Steiner points, beyond the terminals themselves~\cite{promel2002Steiner}. Computing such a tree is NP-hard~\cite{brazil2014uniform}.
Classical Steiner trees impose no upper bound on edge length, whereas communication links between robots are typically limited by their communication range. The Steiner tree problem with a minimum number of Steiner points and bounded edge length therefore seeks the fewest additional relays required to connect a set of terminals while respecting a maximum edge length~\cite{lin1999steiner}. Algorithms for the bounded-edge Steiner problem~\cite{lin1999steiner,chen2017analysis,shin2023efficient} and relay-node placement in wireless sensor networks ~\cite{cheng2008relay,lloyd2007relay} centrally determine relay positions from a set of terminal locations; see~\cite{younis2008strategies} for a broader survey of node-placement strategies. 

Related robotic approaches have also used relay placement and Steiner-tree formulations. Yanmaz et al.~\cite{yanmaz2024dynamic} jointly planned the trajectories of search UAVs and dedicated relay UAVs while maintaining connectivity to a ground station. Nguyen et al.~\cite{nguyen2020new} considered target coverage and connectivity using a weighted Steiner-tree formulation, while Lee et al.~\cite{lee2022dynamic} used a central controller to add, remove, and relocate branches in a robot network for resource transport. Kent et al.~\cite{kent2026robust} proposed an architecture in which a base station computes a Semi-Steiner tree from global task locations and propagates the resulting topology through the swarm.

These methods can directly optimize relay placement or network geometry, but require globally available information and centralized computation, even when robot-level deployment is distributed. In contrast, we address the decentralized formation of a bounded-edge network, without any robot or external entity computing its complete structure.

\subsection{Connectivity-Preserving Control}

Distributed connectivity-maintenance methods regulate robot motion so that the communication graph remains connected while the robots perform other tasks. Krupke et al.~\cite{krupke2015distributed} proposed a cohesive swarm controller that maintains a redundant connected structure under external leader forces. Imran et al.~\cite{imran2016formal} used local virtual forces to improve connected coverage, while Luo et al.~\cite{luo2020behavior} combined task behaviors with global and subgroup connectivity maintenance. Connectivity constraints have also been enforced using control barrier functions~\cite{capelli2020connectivity}.

These approaches primarily treat connectivity as a constraint on robot motion or swarm cohesion. Although some methods select particular communication links or spanning structures to enforce connectivity, they do not generally optimize the geometric structure of a relay network, its total length, or the number of robots assigned as relays.

\subsection{Distributed Relay-Network Formation}

Other distributed approaches explicitly construct or modify relay-network topology while preserving connectivity. Majcherczyk et al.~\cite{Majcherczyk2018} deployed robots into starlike networks connecting multiple locations.
% , although the resulting structures are principally starlike. 
Varadharajan et al.~\cite{varadharajan2020swarm} formed and repaired relay chains between a base and a moving leader. Miyauchi et al.~\cite{miyauchi2022multi} formed a dynamic chain connecting two moving operators, enabling communication and robot exchange between them. Yi et al.~\cite{yi2021Distributed} inserted relay robots into the communication topology to enable task robots to reach their destinations. Soma et al.~\cite{soma2023complete} characterized connectivity-preserving local topology-manipulation operations for robot swarms.

% These works address path-graph or starlike deployment, task feasibility, or general graph reconfiguration. 
Although these works address different aspects of relay-network formation and reconfiguration, they do not jointly address the decentralized construction of a general multi-terminal tree, reduction of its total edge length, and removal of unnecessary relays. Our approach addresses this combination while maintaining connectivity and without centrally computing the final network topology.

\section{Problem Formulation}
\label{sec:problem-formulation}

Consider a planar environment with $n$ locations of interest $\mathbf{q}_1,\dots,\mathbf{q}_n \in \mathbb{R}^2$ and $m$ robots, where $m>n$.
Let $\mathcal{R}=\{1,\dots,m\}$ denote the set of robots, with $\mathbf{p}_i \in \mathbb{R}^2$ denoting the position of robot $i \in \mathcal{R}$.
We assume that each location of interest is assigned to a distinct robot.
All robots can move and communicate with all other robots within their communication range $d^{\text{com}}$ and have unique IDs.
The robots can autonomously establish communication links with one another, enabling the formation and maintenance of a communication network $G=(V,E)$, where $V\subseteq\mathcal{R}$ and $E\subseteq\bigl\{\{i,j\}\mid i,j \in V,\ i \neq j,\ \|\mathbf{p}_i -\mathbf{p}_j\|\leq d^{\text{com}}\bigr\}$.
Throughout this paper, $\|\cdot\|$ denotes the Euclidean norm.
The robots are initially clustered in a designated deployment region, sufficiently close to exchange messages.

The robots' objective is to form a network $G$ that
\begin{itemize}
    \item extends to the locations of interest; that is, for each location $\mathbf{q}_a$, $a\in\{1,\dots,n\}$, there exists  an observer robot $i_a\in V$ with $\mathbf{p}_{i_a}=\mathbf{q}_a$;
    % such that $\|\mathbf{p}_{i_a}-\mathbf{q}_a\|\leq \delta$;
    \item for any pair of distinct locations of interest $\mathbf{q}_a$ and $\mathbf{q}_b$, contains a path in $G$ connecting the corresponding observer robots $i_a$ and $i_b$;
    \item has minimum total edge length, $\sum_{\{i,j\}\in E} \|\mathbf{p}_i - \mathbf{p}_j\|$.
\end{itemize}
This formulation can be viewed as an embodied counterpart to the bounded-edge variant of the Euclidean Steiner tree problem, with at most $m$ robots available to form the network.

\section{Methodology}
\label{sec:methodology}

\begin{figure*}
    \centering
    \includegraphics[width=\linewidth]{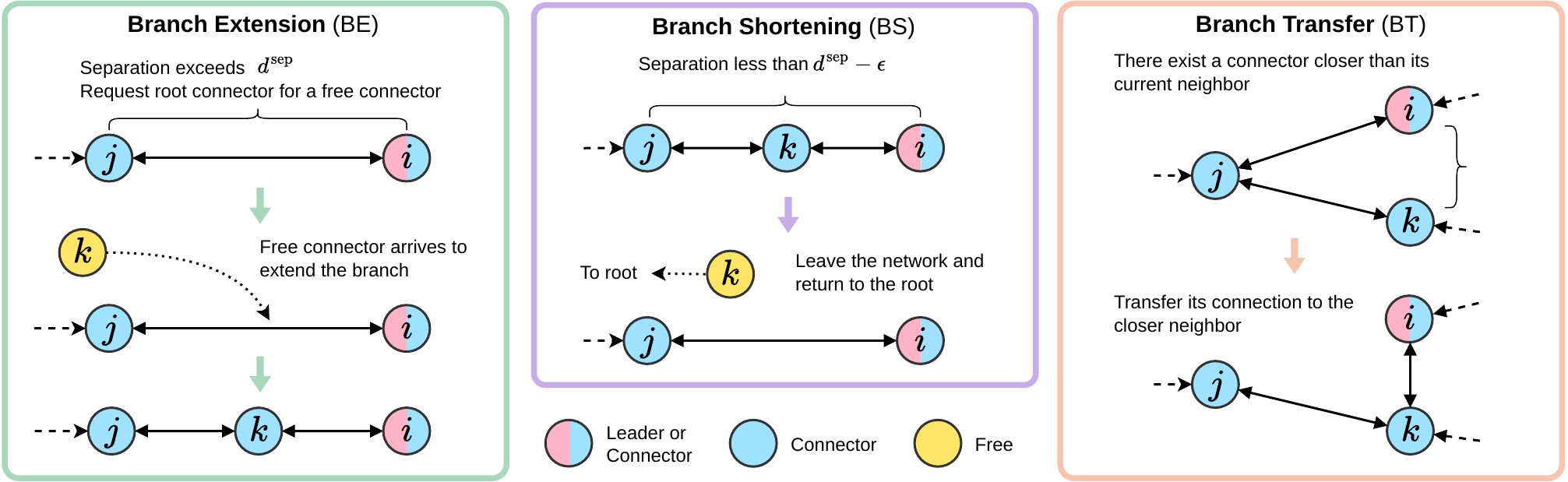}
    \caption{Overview of the three self-organized topology reconfiguration processes: branch extension, branch shortening, and branch transfer.}
    \label{fig:network-change}
\end{figure*}

The swarm consists of $n$ \textit{leader} robots and $m-n$ \textit{worker} robots.
Workers can either be a \textit{connector} if they are part of the network or \textit{free} otherwise.
Each leader robot is assigned a specific location of interest, hereafter referred to as its \textit{target}, and is responsible for reaching that location. We assume that each leader is capable of autonomously navigating to its assigned target using onboard sensing and self-motion estimation, such as odometry or path integration, without relying on GPS or a prior map of the environment.
First, we discuss the local operations to construct and refine the network, then implementation details and the robot's motion.

\subsection{Self-organized Topology Reconfiguration}

The swarm initially elects one worker to become the \textit{root} connector (e.g., by choosing the robot with the lowest unique ID).
All other workers are initially free and remain close to the root.
We assume that the root is initially within communication range of every leader.
The resulting communication graph is a tree, where the leaders represent leaf nodes.
Through hop-to-hop communication, the leaders and connectors forming the network each detect and update a list of hop counts from themselves to other leaders and the root.
Leaders and connectors monitor their distance to their \textit{parent} connector, defined as the unique network neighbor that has a smaller hop count to the root.

To allow leaders to move towards their respective target locations while maintaining connectivity to the root, free workers must switch to becoming a connector and extend the network.
We define three topology reconfigurations that the robots can locally apply to modify the network structure in a self-organized manner: (i) branch extension, (ii) branch shortening, and (iii) branch transfer (see Fig.~\ref{fig:network-change}).
A robot can take part in at most one topology reconfiguration at a time.
\subsubsection{Branch Extension (BE)}

When the distance to the parent connector exceeds the separation threshold $d^{\text{sep}}$, a free worker may join the network through branch extension.

Fig.~\ref{fig:network-change} (left) illustrates branch extension. Consider robot $i$, which may be either a leader or a connector, and let $j$ be its parent connector. 
We use $d_{ij} = \|\mathbf{p}_i - \mathbf{p}_j\|$ to denote the distance between robots $i$ and $j$.
Robot $i$ initiates branch extension if the following conditions are satisfied:
\begin{itemize}
    \item $BE_{\text{1}}$: $d_{ij} > d^{\text{sep}}$.
    \item $BE_{\text{2}}$: $i$ is not performing a topology reconfiguration.
\end{itemize}

When the above conditions are satisfied, robot $i$ informs the root, either directly or via hop messaging, that the connection between $i$ and $j$ needs to be extended.
Upon receiving this message, the root assigns a free worker $k$ to travel along the network and join in between $i$ and $j$.
Once robot $k$ reaches a position no more than $\mu$ away from the midpoint of $i$ and $j$, 
where $\mu \le \frac{d^{\text{sep}}}{2}$,
it sends a request to both robots to join the network.
If both $i$ and $j$ accept the request, $k$ is added to the network, and all three robots update their connections to their new neighbors.

\subsubsection{Branch Shortening (BS)}

Depending on the movement of the leaders and the sequence in which branch extensions and transfers occur, some connectors may end up being redundant.
To reduce the number of connectors required to maintain connectivity, a redundant connector may remove itself from the network through branch shortening.

Fig.~\ref{fig:network-change} (middle) illustrates branch shortening. Consider connector $k$, its parent connector $j$, and a robot $i$, which may be either a leader or a connector, adjacent to $k$.
Branch shortening removes connector $k$ from the network by replacing its connections to $i$ and $j$ with a direct connection between $i$ and $j$.
Robot $k$ initiates branch shortening if the following conditions are satisfied:
\begin{itemize}
    \item $BS_{\text{1}}$: $d_{ij} < d^{\text{sep}}-\epsilon$.
    \item $BS_{\text{2}}$: $k$ has exactly two neighboring robots.
    \item $BS_{\text{3}}$: $k$ is not performing a topology reconfiguration.
\end{itemize}
Here, $\epsilon > 0$ is a buffer that prevents oscillation between further branch extension and branch shortening due to small distance fluctuations.

When the above conditions are satisfied, $k$ sends a request to $i$ and $j$ to perform branch shortening.
If the request is accepted, $i$ and $j$ connect, after which $k$ is removed from $V$ and its incident edges from $E$.
Robot $k$ then becomes free and moves along the network toward the root.

\subsubsection{Branch Transfer (BT)}

Applying branch extension preserves the starlike tree structure connecting the root to the leaders.
To achieve structures beyond starlike trees, a leader or connector may transfer its connection to a connector that is geometrically closer through branch transfer.

Fig.~\ref{fig:network-change} (right) illustrates branch transfer. Consider robot $i$, which may either be a leader or a connector, a neighboring connector $j$, and a connector $k$ adjacent to $j$.
Branch transfer changes the connection $\{i,j\}$ to $\{i,k\}$. 
Robot $i$ initiates branch transfer if the following conditions are satisfied:
\begin{itemize}
    \item $BT_{\text{1}}$: $d_{ik} < d_{ij} - \epsilon$.
    \item $BT_{\text{2}}$: $j$ is connected to at least three neighboring robots.
    \item $BT_{\text{3}}$: $i$ is not performing a topology reconfiguration.
\end{itemize}
Here, $\epsilon > 0$ is a buffer similar to branch shortening to prevent triggering further branch transfers, switching back and forth between its old and new connections.

When the above conditions are satisfied, $i$ sends a request to $j$ and $k$ to perform branch transfer.
Condition $BT_{\text{2}}$ ensures that $j$ remains connected to at least two robots after the transfer.
% After the transfer, connector $k$ adjusts its position toward the geometric median of its new network neighbors. 
As shown later in Section~\ref{sec:analysis}, the edge replacement strictly decreases the total network length.
% , and this subsequent motion does not increase it.

\begin{figure}
    \centering
    \includegraphics[width=\linewidth]{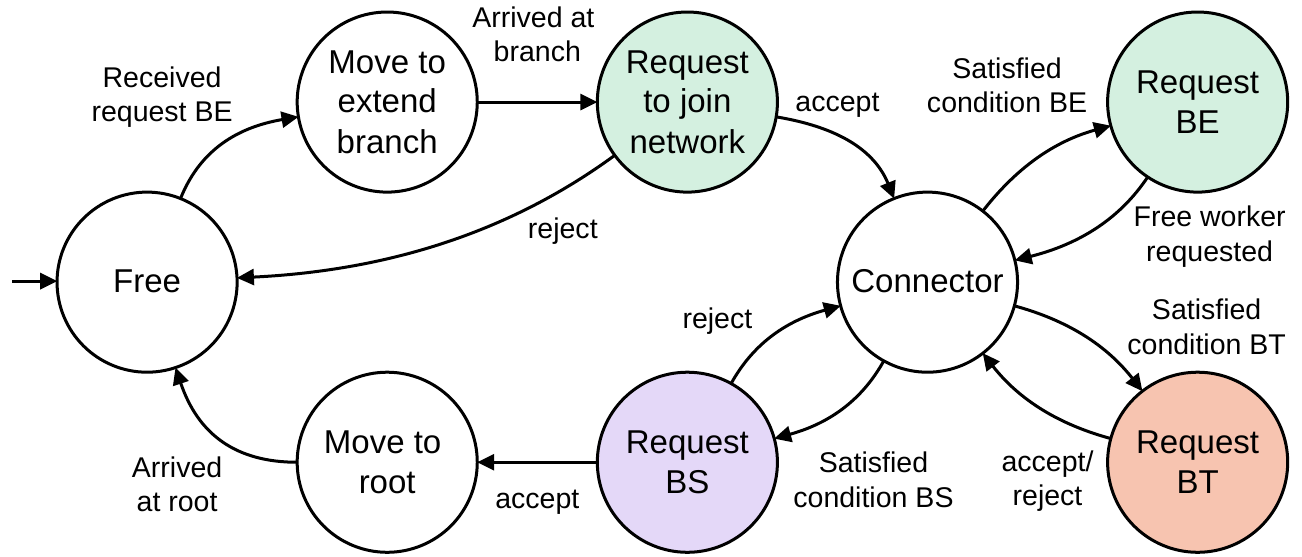}
    \caption{Finite state machine executed by each worker robot. The colored states indicate the different topology reconfiguration processes.}
    \label{fig:state-machine}
\end{figure}

The finite state machine shown in Fig.~\ref{fig:state-machine} describes the overall behavior of a worker robot during branch extension, branch shortening, and branch transfer.

\subsection{Robot Controller Implementation}

To model the aforementioned topology reconfigurations and robot motion at the level of a robot, we use supervisory control theory (SCT) which is a formal approach to control discrete event systems~\cite{ramadge1987Supervisory}.
Our approach is based on~\cite{lopes2016Supervisory}, which details the SCT fundamentals.
The descriptions of the models can be found in the multimedia material.
The source code will be made available at the time of publication.

\subsection{Robot Motion}

This section first describes how each connector moves to minimize its total distance to its network neighbors.
Together with the topology reconfiguration described above, this motion helps reduce the network's total edge length.
The section then describes how free workers move while waiting near the root and while traversing the network.

\subsubsection{Minimizing the Distance to Network Neighbors}

Connectors adjust their positions toward the geometric median of their neighbors in the network.
The \textit{geometric median} is a point that minimizes the sum of distances to a set of points in Euclidean space.
Formally, given a finite set of points $\mathcal{X}\subset\mathbb{R}^2$, the geometric median is defined as
\begin{equation}
    \underset{\mathbf{y}\in \mathbb{R}^2}{\arg\min} \sum_{\mathbf{x}\in\mathcal{X}} \|\mathbf{x}-\mathbf{y}\| .
\end{equation}
The geometric median may coincide with a point in $\mathcal{X}$.

Although closed-form solutions exist for sets of up to three points, the geometric median is generally computed using an iterative process.
In our approach, each connector computes a target position using the positions of its neighboring robots in the network.
The target position is computed by minimizing the sum of distances to these neighbors subject to a preferred safety threshold $d^\text{safe}$, where $d^{\text{sep}} < d^{\text{safe}} < d^{\text{com}}$.
If the constrained problem is infeasible, the solver returns the best-effort target position.

Let $\mathcal{N}_i^G=\{j\in V \mid \{i,j\}\in E\}$ denote the set of network neighbors of connector $i$.
The target position is obtained by solving
\begin{align}
    \underset{\mathbf{y}\in\mathbb{R}^2}{\text{minimize}} & \quad \sum_{j\in\mathcal{N}_i^G} \|\mathbf{p}_j-\mathbf{y}\| \\
    \text{subject to} & \quad \|\mathbf{p}_j-\mathbf{y}\| \leq d^{\text{safe}} \qquad \forall j\in\mathcal{N}_i^G.
\end{align}

Note that when all points are collinear, the geometric median is the middle point when the number of points is odd. When the number of points is even, any point between the two middle points is a geometric median; in this work, we use the midpoint between them.

Each connector computes the target position using the sequential least-squares quadratic programming~\cite{Kraft1988} algorithm, as implemented in the NLopt library~\cite{nlopt}.
The displacement toward the target position is capped at the maximum step length per time step.
Before moving, the SCT-based controller validates the proposed step against every network neighbor. 
The step is permitted only if each resulting distance either remains at most \(d^{\text{safe}}\) or, when its current value lies in \((d^{\text{safe}},d^{\text{com}}]\), decreases. 
Otherwise, the robot is prohibited from moving and remains stationary.
This safety mechanism ensures that connectors maintain their network connections.

\subsubsection{Flocking}

All free workers flock near the root using attractive and repulsive virtual forces from nearby robots, similar to~\cite{miyauchi2022multi}.
The virtual force experienced by robot $i$ is given by
\begin{equation}
    \mathbf{u}_i = \alpha\mathbf{u}^{\text{attract}}_i + \beta\mathbf{u}^{\text{repulse}}_i,
\end{equation}
where $\alpha$ and $\beta$ are positive scalars that weight the influence of each force component.
Let $\mathcal{N}_i$ denote the set of robots that are within $d^{\text{com}}$ of $i$ .

The attractive component $\mathbf{u}^{\text{attract}}_i$ pulls robot $i$ toward the root and is defined as
\begin{equation}
    \mathbf{u}^{\text{attract}}_i = \frac{1}{|\mathcal{N}^{\text{attract}}_i|}\sum_{j\in\mathcal{N}^{\text{attract}}_i}\frac{\mathbf{p}_j - \mathbf{p}_i}{\|\mathbf{p}_j - \mathbf{p}_i\|},
\end{equation}
where $\mathcal{N}^{\text{attract}}_i \subseteq \mathcal{N}_i$ 
contains only the root if it is in communication range. Otherwise, it contains neighboring free workers with fewer communication hops to the root than $i$.
If $\mathcal{N}^{\text{attract}}_i=\emptyset$, then $\mathbf{u}^{\text{attract}}_i=\mathbf{0}$.

The repulsive component $\mathbf{u}^{\text{repulse}}_i$ pushes robot $i$ away from nearby robots and is defined as
\begin{equation}
    \mathbf{u}^{\text{repulse}}_i = -\frac{1}{|\mathcal{N}_i|}\sum_{j\in\mathcal{N}_i}\frac{\sigma^\lambda}{\|\mathbf{p}_j - \mathbf{p}_i\|^\lambda}\frac{\mathbf{p}_j - \mathbf{p}_i}{\|\mathbf{p}_j - \mathbf{p}_i\|},
\end{equation}
where $\sigma$ is the desired separation between robots, and $\lambda$ is the exponent controlling the decay of the repulsive force.
If $\mathcal{N}_i=\emptyset$, then $\mathbf{u}^{\text{repulse}}_i=\mathbf{0}$.

\subsubsection{Traversing the Network}
A free worker traverses along the network to reach its assigned target edge during branch extension or to return toward the root after branch shortening.
Connectors continuously share, with any robot within their communication range, their hop counts to the root and leaders. 
A traversing robot uses this information to choose which nearby connector to move toward.
During branch extension, the free worker is assigned a target edge, the connection between two robots, and a leader whose path from the root passes through that edge.
It follows connectors with decreasing hop counts to the assigned leader until it reaches the target edge.
After branch shortening, it instead moves toward the neighboring connector with the decreasing hop counts to the root.

\section{Analysis}
\label{sec:analysis}

Let the length of a network \(G=(V,E)\) be
\begin{equation}
	L(G) = \sum_{\{i,j\}\in E} d_{ij}.
\end{equation}

\begin{theorem}
\label{thm:connectivity-preservation}
Suppose the initial network comprising the leaders and the root node is connected, and that all nodes of the network sense and act sequentially. %Under the assumption of sequential motion updates, t
The network remains connected under the proposed controller.
\end{theorem}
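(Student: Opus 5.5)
The plan is an induction over the sequence of atomic events. By the sequential sensing-and-acting assumption, the system evolves through a discrete sequence of events, each being either a single robot's motion step or a single completed topology reconfiguration (BE, BS, or BT). The invariant $\mathcal{I}$ will be: $G=(V,E)$ is a tree containing all leaders, and every edge $\{i,j\}\in E$ satisfies $d_{ij}\le d^{\text{com}}$, so that it is a valid communication link. The base case holds by hypothesis, since the root is within $d^{\text{com}}$ of every leader and the leaders attach to it as leaves. For the inductive step, I would show that each event type maps a state satisfying $\mathcal{I}$ to another state satisfying $\mathcal{I}$. I would use the weaker requirement ``connected and spanning the leaders'' wherever tree structure is not needed.

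\textbf{Topology events.} I would handle each reconfiguration as a purely combinatorial edge operation, noting that robots do not move during the instantaneous switch.
\begin{itemize}
\item \emph{BE} subdivides the edge $\{i,j\}$ into $\{i,k\},\{k,j\}$. This preserves connectivity and the tree property. Both new links are feasible because $k$ only joins when within $\mu\le d^{\text{sep}}/2$ of the midpoint. Provided $d_{ij}\le d^{\text{com}}$ (from $\mathcal{I}$) and the parameters satisfy $d^{\text{com}}/2+\mu\le d^{\text{com}}$, the triangle inequality gives $d_{ik},d_{jk}\le d_{ij}/2+\mu\le d^{\text{com}}$. Moreover, $i$ and $j$ must accept, which in practice means they sense $k$ within range.
\item \emph{BS} contracts a degree-two vertex $k$ into the edge $\{i,j\}$. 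This keeps the graph a tree on $V\setminus\{k\}$. Since $k$ is a connector, not a leader, all leaders remain. The new edge is feasible by $BS_1$: $d_{ij}<d^{\text{sep}}-\epsilon<d^{\text{com}}$.
\item \emph{BT} replaces $\{i,j\}$ by $\{i,k\}$ with $k$ adjacent to $j$. Removing $\{i,j\}$ from the tree splits it into two components: $C_i\ni i$ and $C_j\ni j$. Because $k$ is adjacent to $j$ via an edge other than $\{i,j\}$, we have $k\in C_j$, so adding $\{i,k\}$ reconnects the two components into a tree. Condition $BT_2$ ensures $j$ is not left as a dangling leaf, although this is not needed for connectivity. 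Feasibility follows from $BT_1$: $d_{ik}<d_{ij}-\epsilon\le d^{\text{com}}$.
\end{itemize}
Since a robot participates in at most one reconfiguration at a time and events are sequential, these operations never interleave on shared edges.

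\textbf{Motion events.} Connector motion is gated by the SCT safety check. After the step, each incident edge either has length at most $d^{\text{safe}}<d^{\text{com}}$, or had length in $(d^{\text{safe}},d^{\text{com}}]$ and decreased. In both cases it stays at most $d^{\text{com}}$, and otherwise the robot does not move. Free workers are not in $V$, so their motion is irrelevant.

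\textbf{Main obstacle.} The delicate case is \emph{leader} motion. Leaders navigate toward their targets, and the controller as described does not explicitly gate their steps. Here I would argue that leaders are subject to the same validation against their parent link, or that they pause while $d_{ij}>d^{\text{sep}}$ until a BE is completed. Since $d^{\text{sep}}<d^{\text{com}}$, and a single sequential step changes $d_{ij}$ by at most the maximum step length (assumed at most $d^{\text{com}}-d^{\text{sep}}$), the link cannot break before the extension is inserted. I would state this as an explicit standing assumption on leader behavior consistent with the controller and the SCT model.
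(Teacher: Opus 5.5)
Your proposal is correct and follows essentially the paper's route: SCT-gated motion under sequential updates keeps every existing edge within $d^{\text{com}}$, and each of BE, BS, and BT preserves connectivity with feasible new edges (via $\mu\le d^{\text{sep}}/2$, $BS_{\text{1}}$ and $BS_{\text{2}}$, and $BT_{\text{1}}$ plus reconnection of the two components created by deleting $\{i,j\}$). The paper handles your ``main obstacle'' simply by asserting that the SCT check applies to every network node, leaders included (your first option), and your side condition $d^{\text{com}}/2+\mu\le d^{\text{com}}$ for BE is automatic because $\mu\le d^{\text{sep}}/2<d^{\text{com}}/2$; your triangle-inequality bound for BE is in fact more explicit than the paper's one-line argument.
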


\begin{proof}
During motion, the SCT model permits a network node to move only when its resulting distance to the current position of each network neighbor either remains within \([0,d^{\text{safe}}]\) or decreases\footnote{Note that the focal robot can determine whether these properties hold given its neighboring robots can be considered stationary due to the sequential updating assumption.}, if lying in \((d^{\text{safe}},d^{\text{com}}]\).
In either case, the resulting distance remains within \(d^{\text{com}}\), so every existing network edge remains a feasible communication link after each permitted motion step.

It remains to consider the self-organized reconfigurations of the network topology. 
Branch extension adds a node \(k\) to \(V\) and replaces an edge \(\{i,j\}\) with edges \(\{i,k\}\) and \(\{k,j\}\), 
provided that \(\mu \le \frac{d^{\text{sep}}}{2}\); 
hence, it replaces one edge with a path between the same endpoints. 
Branch shortening replaces edges \(\{i,k\}\) and \(\{k,j\}\) with an edge \(\{i,j\}\). By \(BS_{\text{1}}\), \(d_{ij}<d^{\text{sep}}-\epsilon<d^{\text{com}}\), so this replacement edge is valid; by \(BS_{\text{2}}\), \(k\) has no other incident edges to disconnect from, and thus it becomes free. 
Finally, for branch transfer, \(BT_{\text{1}}\) gives \(d_{ik}<d_{ij}-\epsilon < d_{ij} \leq d^{\text{com}}\), so the replacement edge \(\{i,k\}\) is valid. Because \(k\) remains adjacent to \(j\), this replacement reconnects the component containing \(i\) to the component containing \(j\), and \(BT_{\text{2}}\) ensures that \(j\) retains at least two neighbors. 
Thus, every completed topology reconfiguration preserves a path between all remaining robots. As both permitted motion and topology reconfigurations preserve connectivity, the initially connected network remains connected at all times.
\end{proof}

\begin{theorem}
    \label{thm:without-branch-transfer-starlike}
    Assume point robots with no inter-robot repulsion. If sufficient number of robots are present and branch transfer is disabled or does not occur, the %stabilized 
    network converges to a shortest starlike tree connecting the target locations. 
\end{theorem}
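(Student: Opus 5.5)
Without branch transfer, the only topology-changing operations are branch extension and branch shortening, so the plan is to first pin down the topology and then analyse the geometry. Initially the network is a star: the root adjacent to each of the $n$ leaders. Branch extension replaces an edge $\{i,j\}$ by a path $i,k,j$, and branch shortening removes a degree-two connector $k$ and reconnects its two neighbours. Both operations preserve the homeomorphism class of the tree. By induction over the sequence of completed reconfigurations, I will show that the network is always a subdivision of the initial star. That is, it consists of $n$ internally disjoint paths (branches) from the root to the leaders, every connector other than the root has degree exactly two, and the root has degree $n$. This structural invariant is what "starlike tree" means here, and Theorem~\ref{thm:connectivity-preservation} guarantees that each branch stays connected throughout.

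Next I analyse the motion on this fixed topology. A degree-two connector moves toward the geometric median of its two neighbours, which under the paper's convention is their midpoint (the collinear even case). A connector of degree two therefore straightens its branch. The quantity I track is $L(G)$. Moving a node toward the minimiser of the sum of distances to its neighbours (a convex function) does not increase that sum. Each step of the capped step toward the constrained minimiser also does not increase it, because the objective is convex along the segment. Consequently each sequential motion step is non-increasing in $L$. Branch shortening replaces $d_{ik}+d_{kj}$ by $d_{ij}\le d_{ik}+d_{kj}$, so it also does not increase $L$. Branch extension inserts $k$ within $\mu$ of the midpoint, which may increase $L$ by a bounded amount. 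However, extensions are triggered only when $d_{ij}>d^{\text{sep}}$ while leaders are still travelling. Once the leaders sit at their targets (the "sufficient number of robots" assumption ensures they get there), I will argue that only finitely many extensions and shortenings occur, using the hysteresis $\epsilon$. After that point, $L$ is non-increasing and bounded below, and the configuration converges.

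The last step is to identify the limit. At a fixed point, every degree-two connector lies on the segment between its neighbours, so each branch is a straight segment from the root to a leader. The root, having neighbours along the $n$ branches, sits at the (constrained) geometric median of its neighbours. Since each branch is straight, the root's neighbours lie along the directions to the targets $\mathbf{q}_a$. The first-order optimality condition for the root, namely that the sum of unit vectors toward its neighbours vanishes or the root coincides with a neighbour, is then exactly the optimality condition for $\sum_a\|\mathbf{q}_a-\mathbf{y}\|$. Thus the root lies at the geometric median of the targets, and the network is the shortest starlike tree. Convexity of this function guarantees that the stationary point is the global minimiser.

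I expect the main obstacle to be the convergence argument in the second paragraph. The difficulties are controlling the interplay between motion and extension/shortening events, ruling out infinite oscillation (which I would handle through $\epsilon$ and a lower bound on how much each event changes $L$), and handling the $d^{\text{safe}}$ constraint, which may make the root's target a constrained rather than unconstrained median. My first attempt would be to show that, with enough connectors, all branch edges eventually have length at most $d^{\text{sep}}<d^{\text{safe}}$, so the constraint is inactive at the limit. I would then apply a LaSalle-type argument with $L$ as the Lyapunov function on the finite-dimensional configuration space of the fixed topology.
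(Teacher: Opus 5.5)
Your first and third paragraphs follow the paper's proof. The paper also argues three things. Without branch transfer, each extension acts inside one of the $n$ root-to-leader paths. Once the network stabilizes, every two-neighbour connector lies on the segment between its neighbours, so each branch is straight. Because each root neighbour $u_a$ then lies on the segment from the root to $\mathbf{q}_a$, the root's median rule over $\{\mathbf{p}_{u_a}\}$ coincides with the rule over the targets, so $L(G)=\sum_a\|\mathbf{q}_a-\mathbf{p}_{\text{root}}\|$ is minimal.

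You improve on this in two places. You include branch shortening in the topological invariant. You also flag the $d^{\text{safe}}$ constraint, which is easy to dispose of: a stabilized network with spare workers has no pending extension, so every edge is at most $d^{\text{sep}}<d^{\text{safe}}$. A constrained minimizer of a convex function that lies in the interior of the feasible set is then a global minimizer.

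One statement needs repair: ``the root coincides with a neighbour'' is not an optimality condition. At a data point $\mathbf{q}_a$ the condition is $\|\sum_{b\neq a}\mathbf{e}_b\|\le 1$, where $\mathbf{e}_b$ is the unit vector from the root toward $\mathbf{q}_b$. Your equivalence holds because the two subdifferentials coincide. That in turn needs $\mathbf{p}_{u_a}=\mathbf{p}_{\text{root}}$ only when $\mathbf{q}_a=\mathbf{p}_{\text{root}}$. The midpoint convention supplies this through even spacing, $\mathbf{p}_{u_a}=\mathbf{p}_{\text{root}}+(\mathbf{q}_a-\mathbf{p}_{\text{root}})/h_a$, where $h_a$ is the number of edges on branch $a$. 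Under the looser ``any point on the segment'' wording used in the paper's proof, a connector sitting on top of the root gives a spurious fixed point.

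Where your plan departs from the paper is the second paragraph, and that is where it would fail. The paper does not attempt a convergence proof. It only characterizes the network ``once [it] stabilizes'', plus a one-line reach argument that each extension adds up to $d^{\text{safe}}$ of branch length. Your convergence sketch breaks in four places.

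(i) Extensions do not stop when the leaders stop. The root's motion and the connectors' relaxation can stretch edges into $(d^{\text{sep}},d^{\text{safe}}]$, which the safety check permits. They can also shrink two-hop distances below $d^{\text{sep}}-\epsilon$.

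(ii) There is no per-event lower bound on the change in $L$. Shortening changes $L$ by $d_{ik}+d_{kj}-d_{ij}$, which is zero when $k$ lies on $[\mathbf{p}_i,\mathbf{p}_j]$ (exactly where the midpoint dynamics puts it). Extension raises $L$ by an arbitrarily small amount when $k$ joins near that segment. The buffer $\epsilon$ separates \emph{distance} thresholds, not values of $L$. Ruling out infinite extension/shortening cycles therefore needs an argument on spacings. For example, on a straight, evenly spaced branch with spacing $s$, the triggers $s>d^{\text{sep}}$ and $2s<d^{\text{sep}}-\epsilon$ are mutually exclusive; you would then need a proof that transients die out.

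(iii) Your claim that $L$ does not increase under motion presumes the mover already satisfies the $d^{\text{safe}}$ constraints. From an infeasible point, the constrained minimizer can have a larger objective. Initially the root is only assumed to be within $d^{\text{com}}$ of each leader.

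(iv) $L$ non-increasing and bounded below gives convergence of $L$ only, not of the configuration. A LaSalle argument would also need continuity of the update map near the limit set, a fair update order, and isolated fixed points.

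Either state the result for the stabilized network, as the paper effectively does, or supply these pieces.
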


\begin{proof}
Initially, the \(n\) leaders are adjacent to the root, thus forming a starlike graph with the root as the central vertex. The leaders move respectively toward distinct target locations \(\mathbf{q}_1,\dots,\mathbf{q}_n\). When a leader or connector reaches the separation limit \(d^{\text{sep}}\) from its parent, movement stalls until branch extension inserts a connector near the midpoint of the corresponding edge. 
Because each branch extension increases the maximum length of a branch by at least \(d^{\text{safe}}\), sufficiently many robots at the root ensure that every leader eventually reaches its target.
As branch transfer does not occur, each extension only 
concerns one of the $n$ path graphs that connect the root with the leaders, and the path graph property is retained.
Therefore, the graph remains starlike throughout the process, with one branch from the root to every leader.
With sufficient number of robots available, all branches extend to every target.

Every non-root connector has exactly two network neighbors. The geometric median of two points is any point on the line segment joining them. Thus, once the network stabilizes, every non-root connector lies on the segment joining its two branch neighbors. Applying this observation along each root-to-target branch shows that all of its connectors lie on the segment from the root to that target; hence, the root, target, and all connectors on each branch are collinear.

Let \(u_i\) denote the robot that is a neighbor of the root on the branch ending at target \(\mathbf{q}_i\). As all robots on that branch positioned themselves along the line segment connecting the root to the target, 
the root's geometric-median rule over \(\{\mathbf{p}_{u_1},\dots,\mathbf{p}_{u_n}\}\) is therefore equivalent to the geometric-median rule over \(\{\mathbf{q}_1,\dots,\mathbf{q}_n\}\). Hence, the root also places itself at a geometric median of the target locations.

Because every branch is straight, its length equals the distance from the root to its target. Hence, \(L(G)\) equals the sum of the distances from the root to all targets, which by definition is minimized at their geometric median. Therefore, \(G\) is a shortest starlike tree connecting the targets.
\end{proof}

\begin{theorem}
\label{thm:branch-transfer-length}
Consider a branch transfer initiated by robot \(i\), which replaces its network connection to connector \(j\) with a connection to connector \(k\), where \(k\) is adjacent to \(j\). Suppose that, after the transfer, \(k\) moves in a straight line towards the geometric median of its network neighbors, while all other nodes remain stationary.
Then the resulting network \(G'\) has strictly smaller length than the original network \(G\).
\end{theorem}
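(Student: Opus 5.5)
We split the change in length into two stages: the edge replacement itself, and the subsequent straight-line motion of \(k\). Let \(G^{\ast}\) be the network right after the edge \(\{i,j\}\) is replaced by \(\{i,k\}\), with all robots still at their original positions. Let \(G'\) be the network after \(k\) has moved. We will show
\[
L(G') \le L(G^{\ast}) < L(G).
\]

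The first stage is immediate. \(G^{\ast}\) differs from \(G\) only in that one edge is swapped, so
\[
L(G^{\ast}) - L(G) = d_{ik} - d_{ij} < -\epsilon < 0,
\]
by condition \(BT_{\text{1}}\). Condition \(BT_{\text{2}}\) and Theorem~\ref{thm:connectivity-preservation} guarantee that \(G^{\ast}\) is still a valid connected tree. This is not needed for the length bound, but it confirms that the object is well defined.

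For the second stage, note that only \(k\) moves, so only the edges incident to \(k\) change length. Write \(\mathcal{N}_k\) for the set of network neighbours of \(k\) in \(G^{\ast}\); it contains \(i\), \(j\), and the previous neighbours of \(k\). Define
\[
f(\mathbf{y}) = \sum_{l\in\mathcal{N}_k}\|\mathbf{p}_l-\mathbf{y}\|.
\]
Then \(L(G') - L(G^{\ast}) = f(\mathbf{p}_k') - f(\mathbf{p}_k)\), where \(\mathbf{p}_k'\) is the new position of \(k\).

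The function \(f\) is convex, being a sum of norms. Let \(\mathbf{m}\) be the geometric median, a minimiser of \(f\). Each point on the segment is \(\mathbf{y}_t=(1-t)\mathbf{p}_k+t\mathbf{m}\) with \(t\in[0,1]\). Convexity gives
\[
f(\mathbf{y}_t)\le (1-t)f(\mathbf{p}_k)+t f(\mathbf{m}) \le f(\mathbf{p}_k).
\]
So any prefix of the straight-line motion toward \(\mathbf{m}\) does not increase \(f\). This covers the step cap, a partial motion, and the case where the SCT safety check stops \(k\) early, since \(k\) then remains at some \(\mathbf{y}_t\). Hence \(L(G')\le L(G^{\ast})\). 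Combined with the first stage, this gives \(L(G')<L(G)\), and in fact \(L(G') < L(G)-\epsilon\).

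The main obstacle is that the controller actually targets the minimiser of the \emph{constrained} problem with the \(d^{\text{safe}}\) constraints, possibly as a best-effort point, rather than the unconstrained geometric median. The statement assumes \(k\) moves toward the geometric median, so the convexity argument applies as written. If the target \(\mathbf{m}\) is instead the constrained minimiser, the same argument still works provided \(\mathbf{m}\) satisfies \(f(\mathbf{m})\le f(\mathbf{p}_k)\). This holds whenever \(\mathbf{p}_k\) is itself feasible, because \(\mathbf{m}\) minimises \(f\) over a convex set (an intersection of disks) that contains \(\mathbf{p}_k\). The infeasible best-effort case falls outside the hypothesis and would need to be excluded.

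Non-uniqueness of the median, as with collinear neighbours, causes no difficulty: every minimiser attains the same minimum value of \(f\), so the inequality \(f(\mathbf{m})\le f(\mathbf{p}_k)\) holds for whichever one is chosen.
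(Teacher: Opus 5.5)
Your proposal is correct and matches the paper's proof. It uses the same two steps: the edge swap lowers the length by more than $\epsilon$ via $BT_{\text{1}}$, and the motion of $k$ cannot increase it because the sum of distances to its fixed neighbours is convex and minimised at the median. Your explicit convexity inequality along the segment, and your remarks on capped or interrupted steps and on the constrained target, are sound refinements that the paper leaves implicit.
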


\begin{proof}
Let \(G^{+}\) denote the network immediately after the edge replacement and before \(k\) moves. Branch transfer replaces the edge \(\{i,j\}\) with \(\{i,k\}\), while leaving all other network edges unchanged (see Fig.~\ref{fig:network-change} (right)). Thus,
\begin{equation}
    E^{+} = \bigl(E \setminus \{\{i,j\}\}\bigr) \cup \{\{i,k\}\}.
\end{equation}
By condition \(BT_{\text{1}}\), the transfer is accepted only when \(d_{ik}<d_{ij}-\epsilon\). Therefore,
\begin{align}
    L(G^{+})
    &= L(G) - d_{ij} + d_{ik} < L(G) -\epsilon < L(G).
\end{align}

Let \(\mathcal{N}_k^{+}\) denote the set of network neighbors of \(k\) immediately after the transfer.
Let \(\mathbf{p}_k\) be the position of \(k\) before its motion, and let \(\mathbf{p}_k^{*}\) be the geometric median of the fixed positions of the robots in \(\mathcal{N}_k^{+}\). By the definition of the geometric median,
\begin{equation}
    \sum_{\ell\in\mathcal{N}_k^{+}}
    \|\mathbf{p}_k^{*}-\mathbf{p}_\ell\|
    \leq
    \sum_{\ell\in\mathcal{N}_k^{+}}
    \|\mathbf{p}_k-\mathbf{p}_\ell\|.
\end{equation}
The sum of distances to the fixed neighbors is a convex function minimized at \(\mathbf{p}_k^{*}\), so it cannot increase as \(k\) moves along the line toward this point.
All other edge lengths remain unchanged, giving \(L(G')\leq L(G^{+})<L(G)\).
\end{proof}

\section{Simulations}
\label{sec:simulations}

\begin{figure*}[t]
    \centering
    % --- Left Side: 2x3 Grid ---
    \begin{minipage}[b]{0.6\textwidth}
        \centering
        \includegraphics[width=0.31\textwidth]{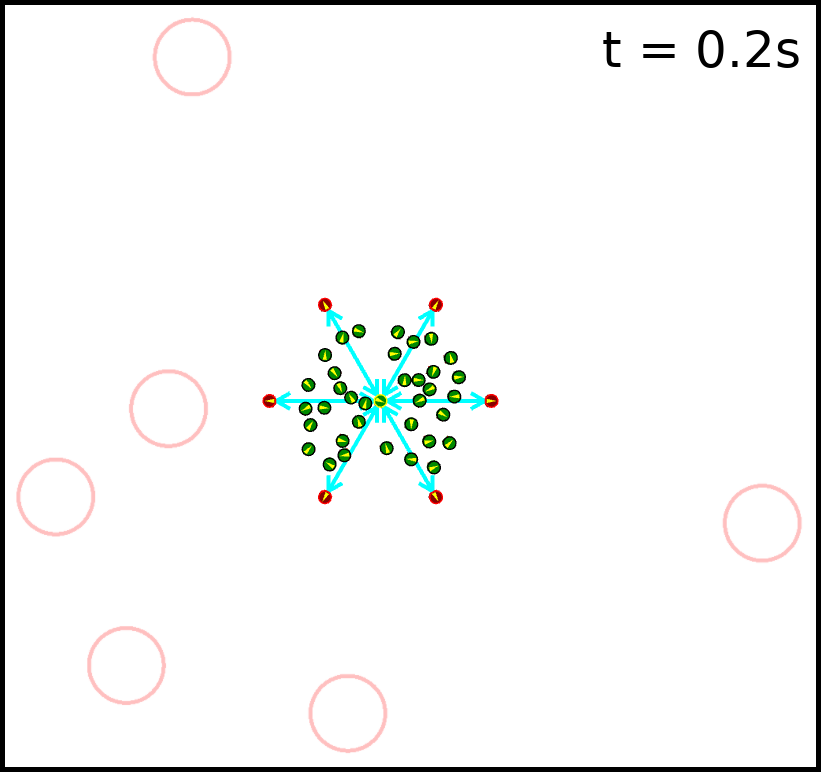} \hfil
        \includegraphics[width=0.31\textwidth]{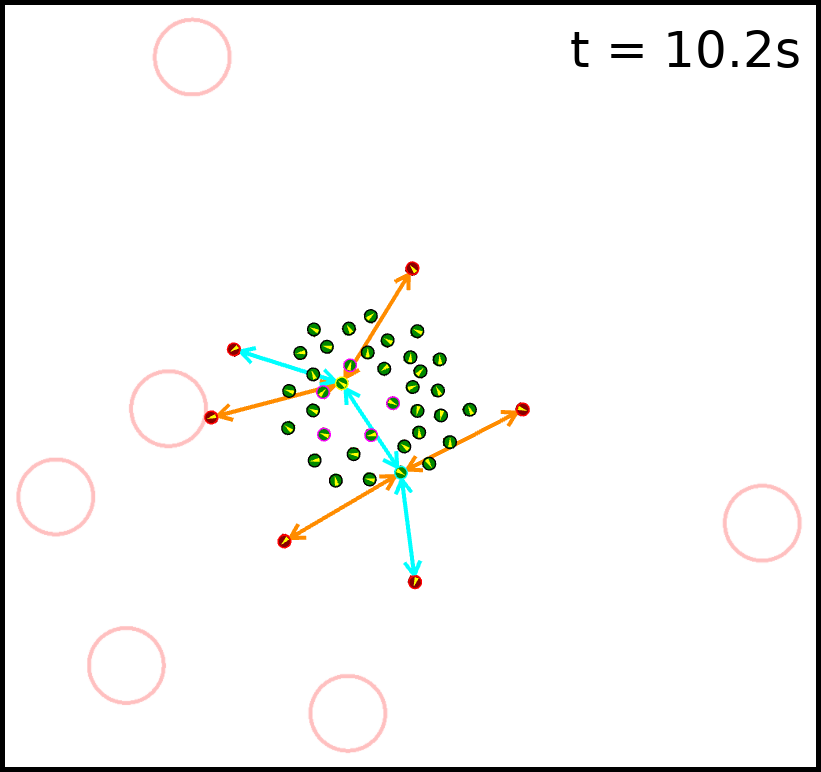} \hfil
        \includegraphics[width=0.31\textwidth]{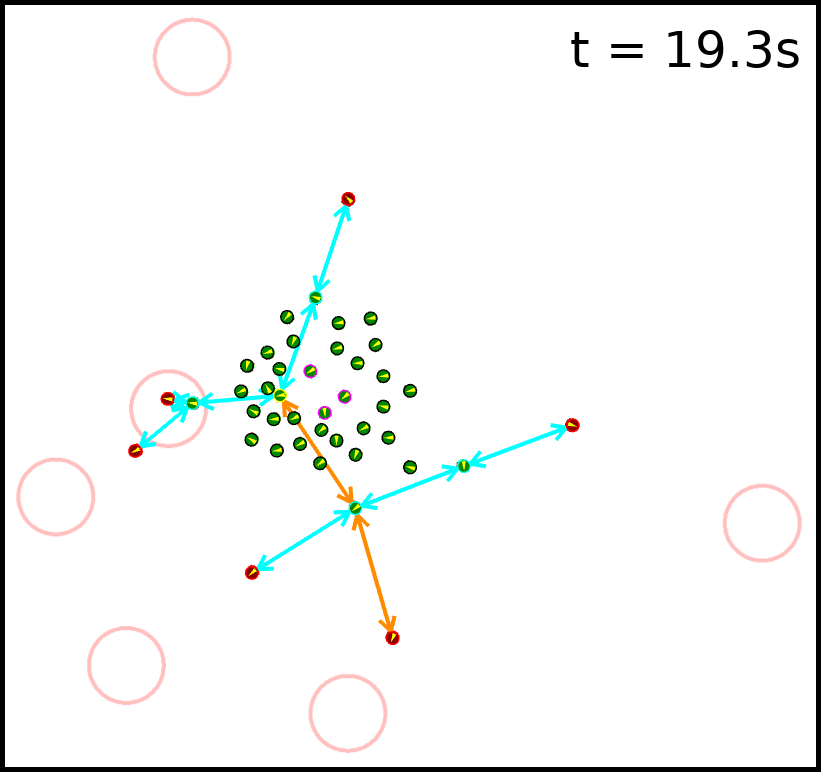}
        
        \vspace{0.1cm} 
        
        \includegraphics[width=0.31\textwidth]{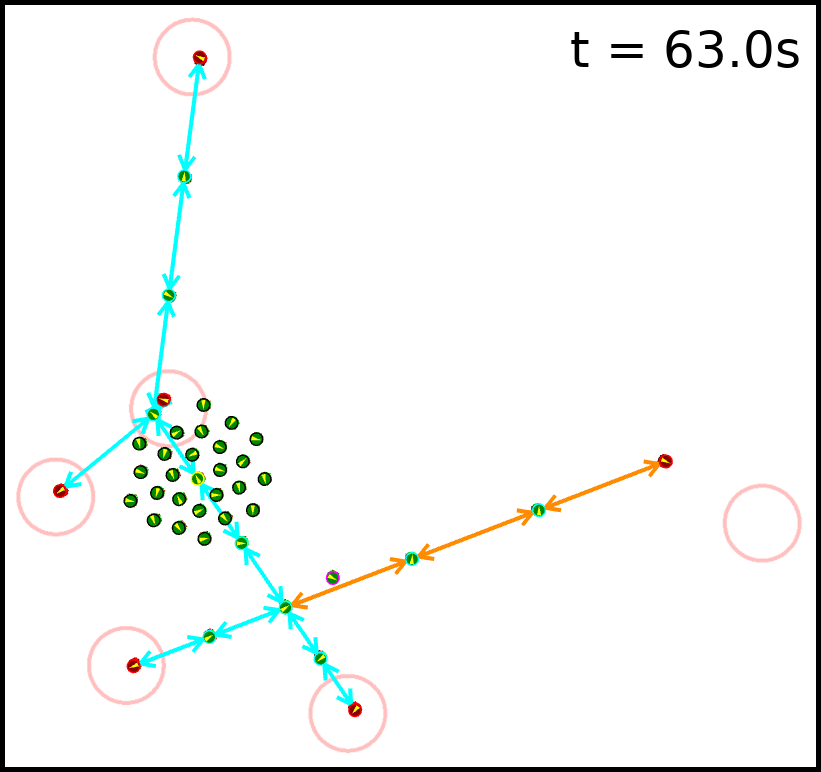} \hfil
        \includegraphics[width=0.31\textwidth]{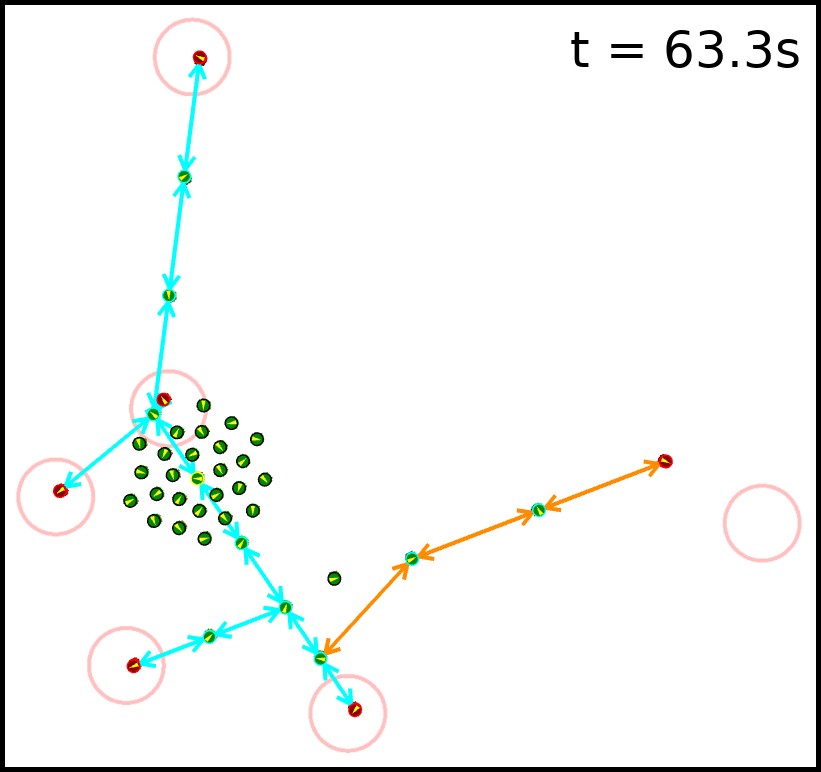} \hfil
        \includegraphics[width=0.31\textwidth]{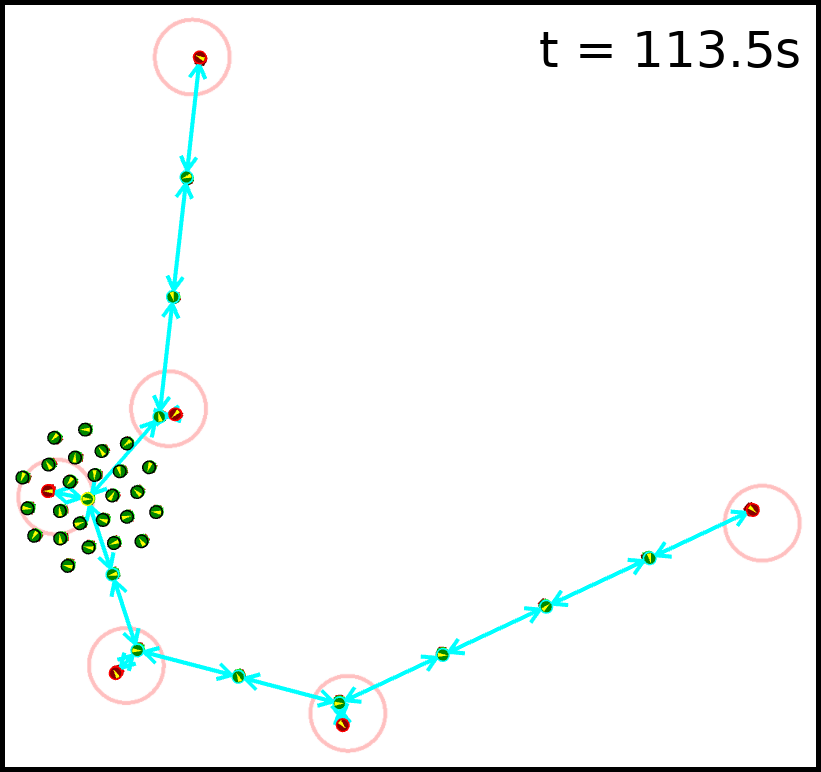}
    \end{minipage}
    % \hfill
    % --- Right Side: Single Large Figure ---
    \begin{minipage}[b]{0.35\textwidth}
        \centering
        \includegraphics[width=\textwidth]{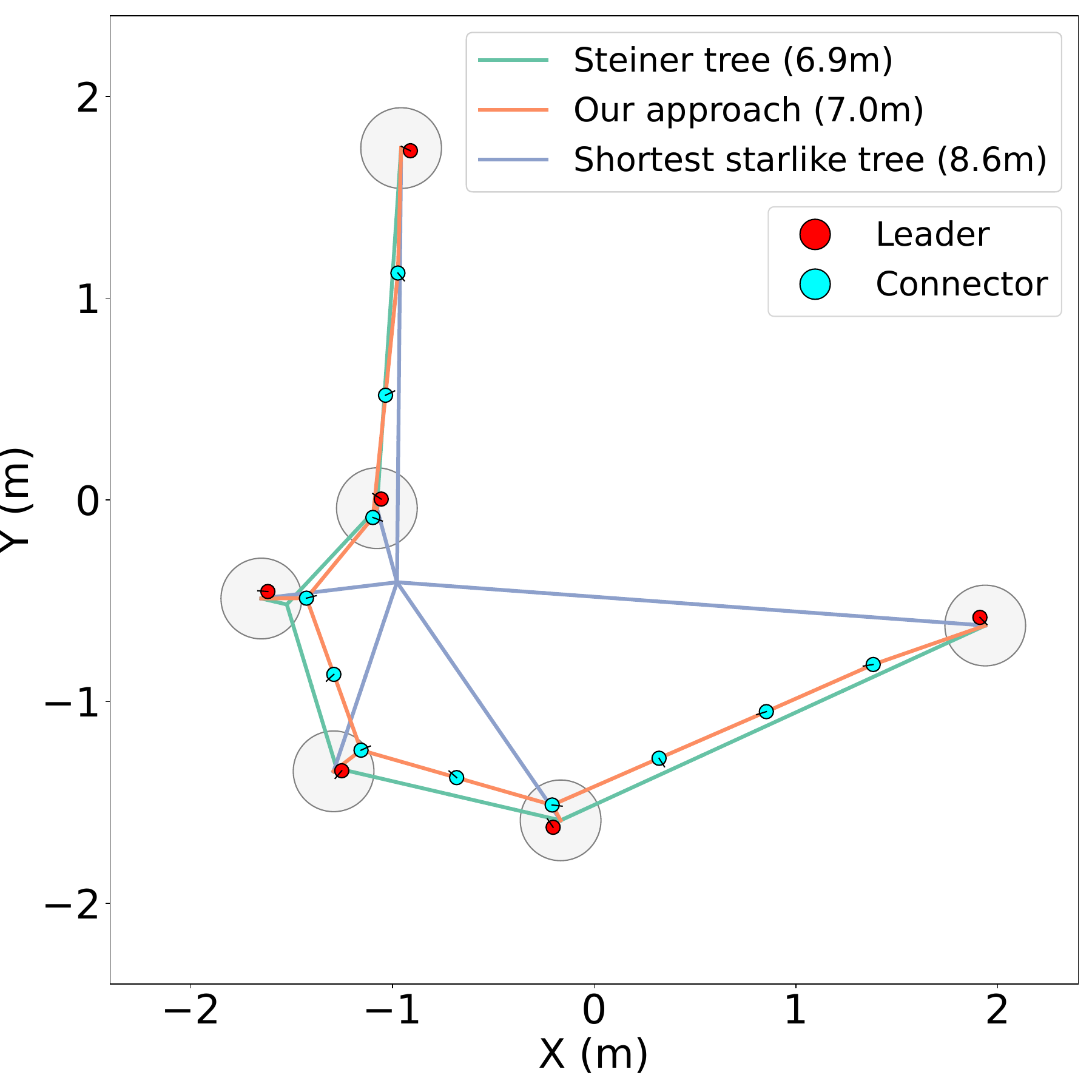}
    \end{minipage}

    \caption{Left: Snapshots from a simulation trial. Right: Final robot positions for the same trial. Gray circles indicate target locations. Lines show the network formed by our approach, the centrally computed Steiner tree, and the centrally computed shortest starlike tree, as indicated in the legend.}
    \label{fig:simulation-snapshots}
\end{figure*}

\subsection{Simulation Setup}

Simulation trials were conducted using ARGoS~\cite{pinciroli2012ARGoS} with the e-puck~\cite{mondada2009epuck}, a mobile differential-wheeled robot.
Robots were equipped with a range-and-bearing system for communication with neighbors, using a range of $d^{\text{com}} =$ 0.80\,m, a preferred safety threshold of $d^{\text{safe}} =$ 0.70\,m, and a separation threshold of $d^{\text{sep}} =$ 0.68\,m.
Target locations were uniformly randomly distributed in a circular arena of radius 2.4\,m.

The simulation physics and the robot control cycle were updated at 30\,Hz.
Trials were terminated 10\,s after all leaders were less than 0.05\,m from their target locations, allowing the robots to stabilize, or after 10 minutes, whichever occurred first.
Trials were repeated 100 times for each configuration.
Videos of selected trials can be found in the multimedia material.

\subsection{Network Length}

To evaluate the length of the networks formed by our approach, we compare them against two centrally computed baselines and one ablation:
\begin{itemize}
    \item \textbf{Centrally computed Steiner trees.} As described in Section~\ref{sec:steiner-tree}, Steiner trees are networks of minimal length. We use the GeoSteiner solver~\cite{winter1985algorithm,juhl2018GeoSteiner} with the target locations as input.
    \item \textbf{Centrally computed shortest starlike trees.} These have a root node at a geometric median of the target locations, computed by the iterative solver in~\cite{vardi2000multivariate}. The targets are connected to the root node by line segments.
    \item \textbf{Our approach without branch transfer.} For these, Theorem~\ref{thm:without-branch-transfer-starlike} predicts that the resulting network converges to the centrally computed shortest starlike tree.
\end{itemize}
We investigate the performance for $n\in\{2,\dots,10\}$ target locations.
The swarm consisted of $n$ leaders and $6n$ workers.
Each leader was assigned a unique target location to visit.

In all trials, the leaders reached their target locations while the workers maintained the network. 
Fig.~\ref{fig:simulation-snapshots} (left) illustrates the progression of one such trial through a sequence of snapshots. The final network formed by our approach is shown in 
Fig.~\ref{fig:simulation-snapshots} (right), overlaid with the two centrally computed benchmarks for comparison; note that the latter disregard communication range constraints. 
One can observe that the shortest starlike tree features a dedicated branch per target, extending from a root node, whereas our approach and the Steiner tree share branches among certain targets, reducing the total network length.

Unlike the sequential motion assumed in Theorem~\ref{thm:connectivity-preservation}, the implementation allows simultaneous robot motion. Although the theorem does not cover this, no connectivity loss was observed in any of the simulation trials.

\begin{figure}[t]
    \centering
    \includegraphics[width=\linewidth]{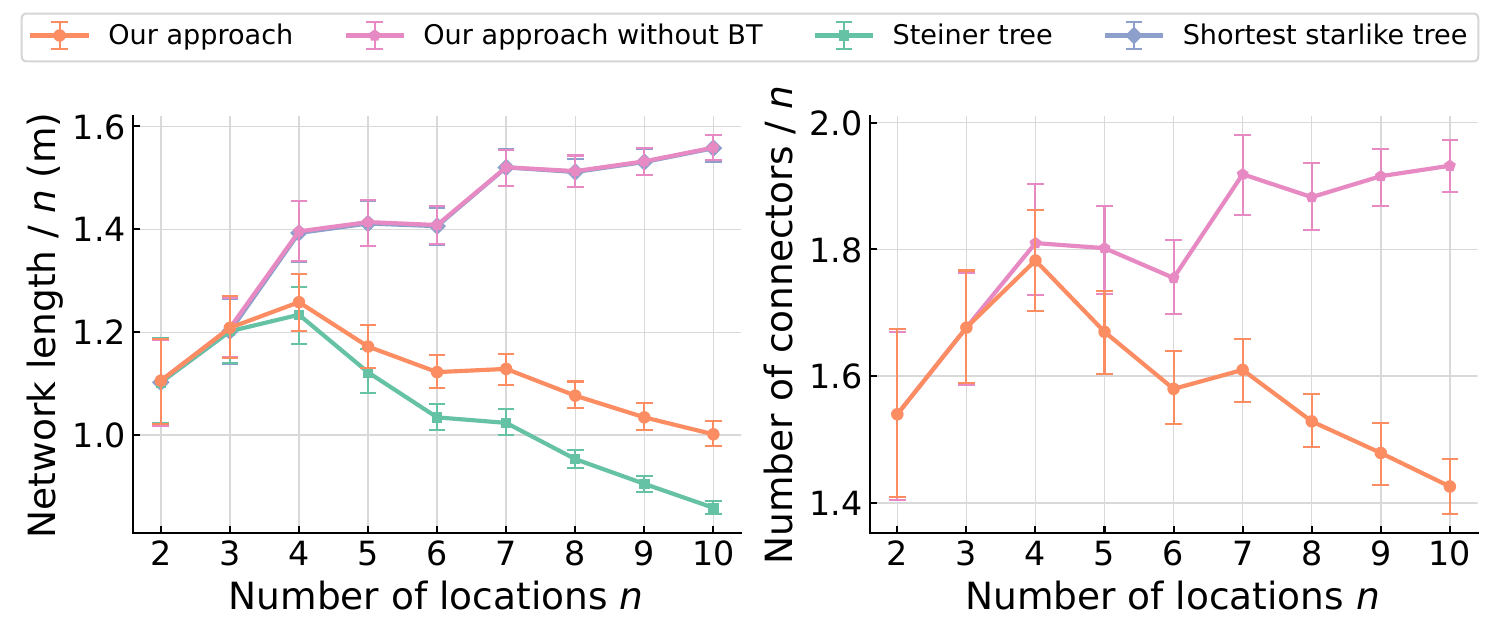}
    \caption{Comparison of the network length ($L(G)/n$) and number of connectors ($N_{\text{connector}}/n$) per target for varying numbers of target locations.}
    \label{fig:sim-result}
\end{figure}

Fig.~\ref{fig:sim-result} (left) shows the network length per target location for varying $n$. 
For $n=2$ and $3$, all approaches produce comparable network lengths.
As $n$ increases, the results begin to diverge: the shortest starlike tree sees a moderate increase in per-location length, whereas our approach and the Steiner tree also increase up to $n=4$ before exhibiting a clear downward trend. 
This decrease can hence be attributed to branch transfer, which enables segments to be shared among multiple leaders, reducing the per-location network length as $n$ grows.
Our approach without branch transfer mirrors the centrally computed shortest starlike tree, 
further confirming the effectiveness of the distributed implementation.

\subsection{Number of Connectors}

Fig.~\ref{fig:sim-result} (right) shows the number of connectors per target location for our approach with and without branch transfer. 
For $n\leq4$, both approaches require a comparable number of connectors per location. 
As $n$ increases, our approach with branch transfer exhibits a downward trend, as shared segments reduce the total number of connectors needed to maintain the network. 
Without branch transfer, the number of connectors per location remains relatively stable, consistent with the shortest starlike tree where each target requires a dedicated branch.
This confirms that branch transfer is effective not only in reducing network length but also in reducing the number of connectors deployed.

\begin{figure}[t]
    \centering
    \includegraphics[width=\linewidth]{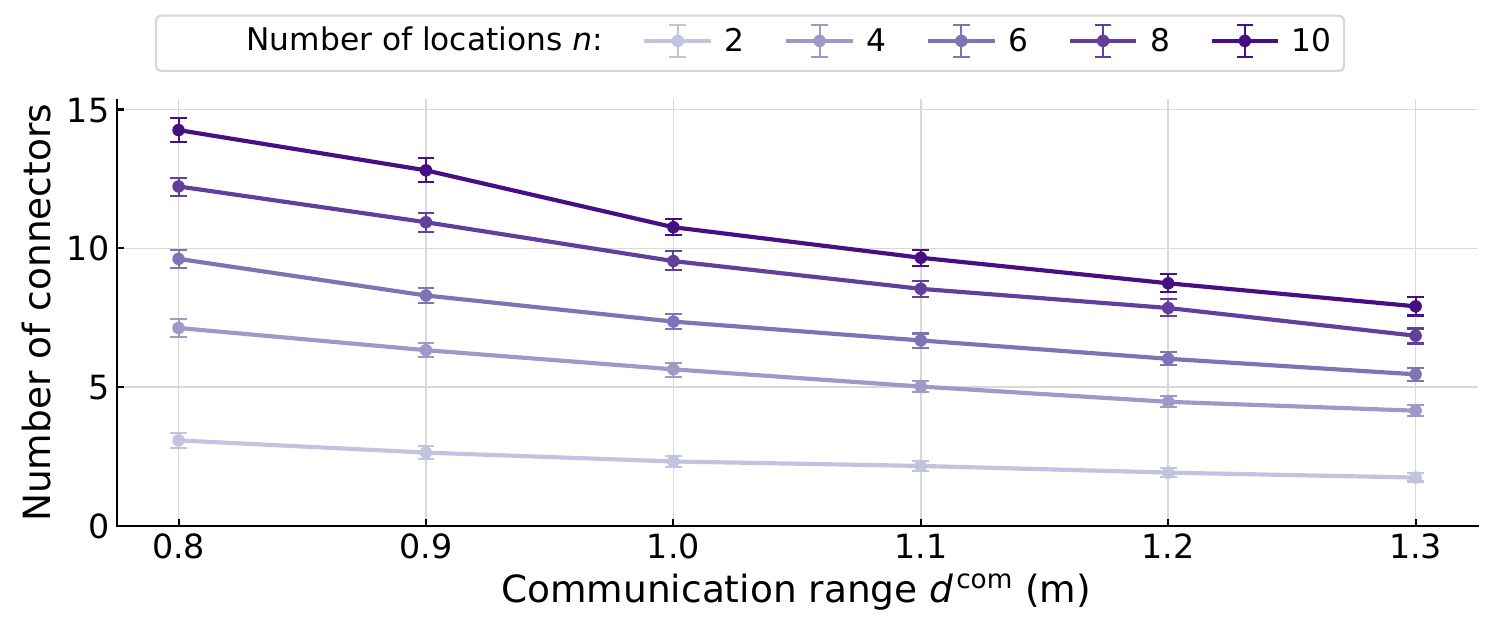}
    \caption{Number of connectors for varying communication ranges when using our approach.}
    \label{fig:result-comm-range}
\end{figure}

\subsection{Impact of Communication Range}

We further evaluated the impact of communication range by testing $d^{\text{com}} \in \{0.8, 0.9, \dots, 1.3\}$\,m for $n \in \{2, 4, 6, 8, 10\}$ target locations. 
Fig.~\ref{fig:result-comm-range} shows that as $d^{\text{com}}$ increases, the number of connectors decreases consistently across all values of $n$. 
This confirms that the approach generalizes across communication ranges, with robots effectively exploiting longer ranges to reduce the number of connectors needed to maintain the network.

\subsection{Moving Leaders}

To test whether the network remains connected under continuously moving leaders, we use four leaders and $d^{\text{com}}=0.8$.
The video in the multimedia material shows each leader visiting two target locations in sequence.
As the leaders move, the robots maintaining the network continue to adapt by applying the three topology reconfiguration operations.

\section{Experiments}
\label{sec:experiments}

\subsection{Experimental Setup}

\begin{figure}[t]
    \centering
    \subfloat[\label{fig:real-robots:with-BT}Our approach]{\includegraphics[width=.49\linewidth]{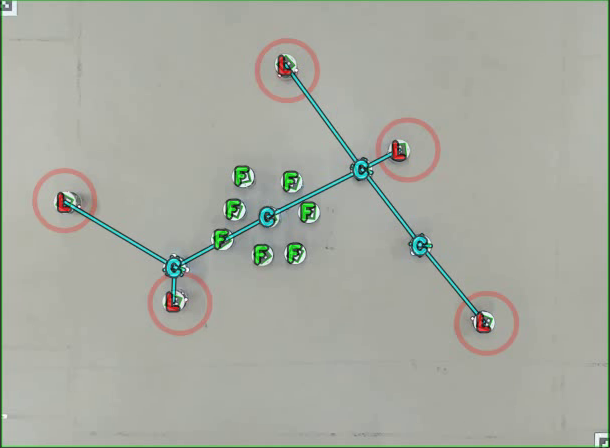}}
    \hfill
    \subfloat[\label{fig:real-robots:without-BT}Our approach without BT]{\includegraphics[width=.49\linewidth]{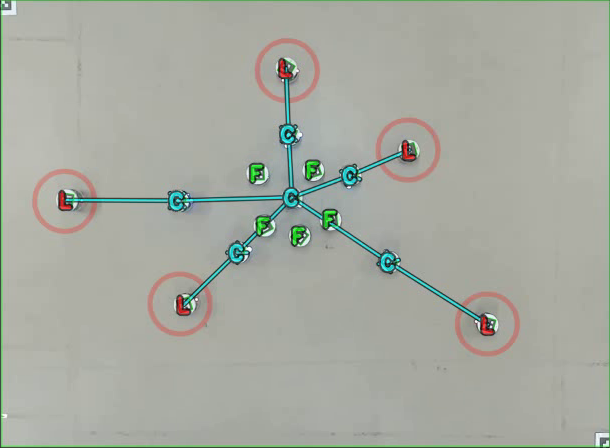}}
    \caption{Snapshots from a real robot experiment. 
    (a) Final network with branch transfer. (b) Final network without branch transfer. Leaders (L, red), connectors (C, cyan), and free workers (F, green) are labelled. Cyan lines denote connections between robots. Red circles indicate target locations.}
    \label{fig:real-robots}
\end{figure}

\begin{figure}[t]
    \centering
    \includegraphics[width=\columnwidth]{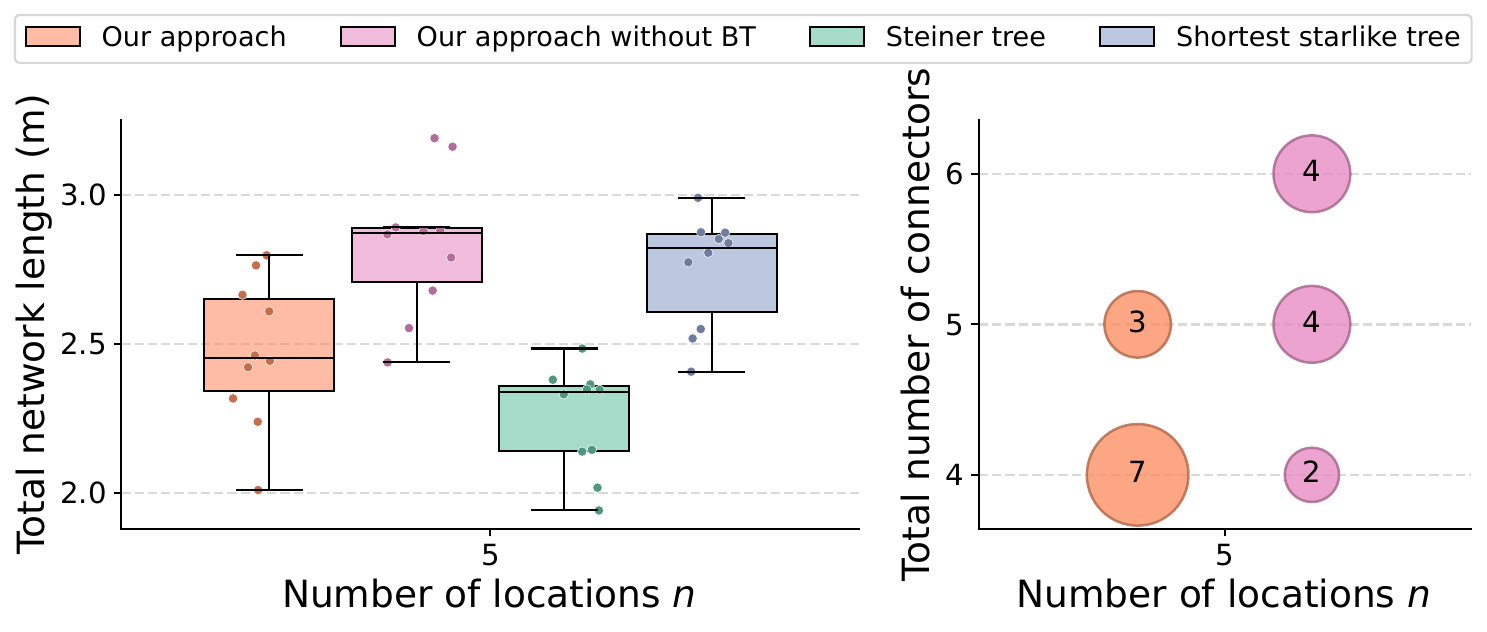}
    \caption{Comparison of the total network length and number of connectors from real robot experiments.}
    \label{fig:real-robots-results}
\end{figure}

We conducted experiments using 16 Pi-pucks~\cite{millard2017Pipuck} in a 2\,m$\times$1.5\,m arena to validate our approach in a real-world setting. 
The arena contained five uniformly distributed target locations. The swarm consisted of 5 leaders and 11 workers, with each leader assigned a unique target location. 
The robots were initially arranged in a 4\,$\times$\,4 grid at the center of the arena with random orientations. 
An overhead camera tracked the position and orientation of each robot via unique ArUco markers~\cite{millard2018ardebug}, and was used to implement a virtual range-and-bearing sensor.
We used $d^\text{com}=0.50$\,m, $d^\text{safe}=0.45$\,m, and $d^\text{sep}=0.43$\,m for the experiments. 
Trials were performed 10 times both with and without branch transfer. 
Each trial was terminated once all leaders were less than 0.03\,m from their target locations and all connectors ceased motion. 

\subsection{Results}

In all trials, the leaders
reached their targets, while workers maintained network connectivity. Videos are provided in the multimedia material. Fig.~\ref{fig:real-robots} shows snapshots of the final network formed with and without branch transfer.
Branch transfer reduced the total length from 2.88\,m to 2.46\,m and the number of connectors from six to four.
As shown in Fig.~\ref{fig:real-robots-results}, these experiments followed the trends observed in simulation: our approach formed networks that were significantly shorter than starlike trees and, averaging only 9.9\% longer than Steiner trees, while using fewer connectors.

\section{Conclusion}
\label{sec:conclusion}

We presented a decentralized controller that enables robot swarms to form and maintain communication networks among multiple locations using local sensing and communication, without global position information or centralized topology computation. 
The controller combines branch extension, branch shortening, and branch transfer with local motion toward the geometric median of neighboring robots to reduce total network length while preserving connectivity. 
Our analysis shows that the connectivity of the network is preserved,
presents an upper bound of its length which corresponds to that of shortest star-like trees, and
shows that every branch transfer strictly decreases network length. 
Simulations and physical-robot experiments reveal that branch transfer produces networks significantly shorter than the shortest starlike tree baseline and not far in length to centrally computed Steiner trees, while using fewer connectors required. 
Future work will address obstacle-rich environments and robustness to robot and communication failures.

\addtolength{\textheight}{0cm}   % This command serves to balance the column lengths
                                  % on the last page of the document manually. It shortens
                                  % the textheight of the last page by a suitable amount.
                                  % This command does not take effect until the next page
                                  % so it should come on the page before the last. Make
                                  % sure that you do not shorten the textheight too much.

%%%%%%%%%%%%%%%%%%%%%%%%%%%%%%%%%%%%%%%%%%%%%%%%%%%%%%%%%%%%%%%%%%%%%%%%%%%%%%%%

%%%%%%%%%%%%%%%%%%%%%%%%%%%%%%%%%%%%%%%%%%%%%%%%%%%%%%%%%%%%%%%%%%%%%%%%%%%%%%%%

%%%%%%%%%%%%%%%%%%%%%%%%%%%%%%%%%%%%%%%%%%%%%%%%%%%%%%%%%%%%%%%%%%%%%%%%%%%%%%%%
% \section*{APPENDIX}

% Appendixes should appear before the acknowledgment.

\section*{ACKNOWLEDGMENT}

This work was supported in part by OpenSwarm Project (EU's Horizon Europe Framework Programme under Grant 101093046, and UKRI under Grant 10048272 of the U.K. Government's Horizon Europe funding Guarantee), and Robotics Institute Germany BMBF under Grant 16ME1001.

%%%%%%%%%%%%%%%%%%%%%%%%%%%%%%%%%%%%%%%%%%%%%%%%%%%%%%%%%%%%%%%%%%%%%%%%%%%%%%%%

% References are important to the reader; therefore, each citation must be complete and correct. If at all possible, references should be commonly available publications.

\bibliographystyle{IEEEtran}
\bibliography{references}

\end{document}